\def\dd{{\mathrm{d}}}
\def\sX{{\mathsf{X}}}
\def\sY{{\mathsf{Y}}}
\def\sZ{{\mathsf{Z}}}
\def\1{\bm{1}}
\DeclareMathAlphabet{\mathsfit}{\encodingdefault}{\sfdefault}{m}{sl}
\SetMathAlphabet{\mathsfit}{bold}{\encodingdefault}{\sfdefault}{bx}{n}
\newcommand{\R}{\mathbb{R}}
\newcommand{\KL}{D_{\mathrm{KL}}}
\DeclareMathOperator*{\argmin}{arg\,min}
\definecolor{astral}        {RGB}{46,116,181}
\definecolor{cb-blue}       {RGB}{70, 130, 180}
\definecolor{orange}        {RGB}{214,150, 92}
\definecolor{green}         {RGB}{136,196,136}
\newcommand{\HEADER}[1]{\ALC@it\underline{\textsc{#1}}\begin{ALC@g}}
\newcommand{\ENDHEADER}{\end{ALC@g}}
\theoremstyle{plain}
\newtheorem{theorem}{Theorem}[section]
\newtheorem{lemma}[theorem]{Lemma}
\theoremstyle{definition}
\newtheorem{proposition}[theorem]{Proposition}
\newacronym{DM}{{{\textsc{\small DM}}}}{Diffusion Model}
\newacronym{GAN}{{{\textsc{\small GAN}}}}{Generative Adversarial Network}
\newacronym{VAE}{{{\textsc{\small VAE}}}}{Variational Autoencoder}
\newacronym{IWAE}{{{\textsc{\small IWAE}}}}{Importance Weighted Variational Autoencoder}
\newacronym{DiffusionVAE}{{{\textsc{\small DiffusionVAE}}}}{ VAE with a Diffusion Prior}
\newacronym{VI}{{{\textsc{\small VI}}}}{Variational Inference}
\newacronym{PSVI}{{{\textsc{\small PSVI}}}}{Particle Semi-implicit Variational Inference}
\newacronym{UIVI}{{{\textsc{\small UIVI}}}}{Unbiased Implicit Variational Inference}
\newacronym{SVI}{{{\textsc{\small SVI}}}}{Semi-implicit Variational Inference}
\newacronym{VQVAE}{{{\textsc{\small VQ-VAE}}}}{Vector-Quantized Variational Autoencoder}
\newacronym{LDM}{{{\textsc{\small LDM}}}}{Latent Diffusion Model}
\newacronym{EBM}{{{\textsc{\small EBM}}}}{Energy-based Model}
\newacronym{IPLD}{{{\textsc{\small IPLD}}}}{Interacting Particle Latent Diffusion}
\newacronym{DDPM}{{{\textsc{\small DDPM}}}}{Denoising Diffusion Probabilistic Model}
\newacronym{LSGM}{{{\textsc{\small LSGM}}}}{Latent Score-based Generative Models}
\newacronym{SVGD}{{{\textsc{\small SVGD}}}}{Stein Variational Gradient Descent}
\newacronym{MMD}{{{\textsc{\small MMD}}}}{Maximum Mean Discrepancy}
\newacronym{ELBO}{{{\textsc{\small ELBO}}}}{Evidence Lower BOund}
\newacronym{LDDM}{{{\textsc{\small LDDM}}}}{Latent Denoising Diffusion Model}
\newacronym{LVM}{{{\textsc{\small LVM}}}}{Latent Variable Model}
\newacronym{PGD}{{{\textsc{\small PGD}}}}{Particle Gradient Descent}
\newacronym{MPGD}{{{\textsc{\small MPGD}}}}{Momentum Particle Gradient Descent}
\newacronym{EM}{{{\textsc{\small EM}}}}{Expectation-Maximisation}
\newacronym{FID}{{{\textsc{\small FID}}}}{Fréchet Inception Distance}
\newacronym{GMM}{{\textsc{\small GMM}}}{Gaussian Mixture Model}
\newacronym{SWD}{{\textsc{\small SWD}}}{Sliced Wasserstein Distance}
\newacronym{LPIPS}{\textsc{\small LPIPS}}{Learned Perceptual Image Patch Similarity}
\newacronym{PSNR}{\textsc{\small PSNR}}{Peak Signal-to-Noise Ratio}
\newacronym{WGF}{\textsc{\small WGF}}{Wasserstein Gradient Flow}
\newacronym{DWGF}{\textsc{\small DWGF}}{Diffusion-regularized Wasserstein Gradient Flow}
\newacronym{PSLD}{\textsc{\small PSLD}}{Posterior Sampling with Latent Diffusion}
\newacronym{RLSD}{\textsc{\small RLSD}}{Repulsive Latent Score Distillation}
\newcommand{\vae}{\gls*{VAE}}
\newcommand{\mmd}{\gls*{MMD}}
\title{A Gradient Flow Approach to Solving Inverse Problems with Latent Diffusion Models}
\author{%
   Tim Y. J. Wang \\
  Department of Mathematics\\
  Imperial College London\\
  \texttt{tw1320@ic.ac.uk}
  \And
  {O}. Deniz Akyildiz \\
  Department of Mathematics\\
  Imperial College London\\
  \texttt{deniz.akyildiz@imperial.ac.uk}
}
\begin{document}

\maketitle
\begin{abstract}
Solving ill-posed inverse problems requires powerful and flexible priors. We propose leveraging pretrained latent diffusion models for this task through a new training-free approach, termed Diffusion-regularized Wasserstein Gradient Flow (DWGF). Specifically, we formulate the posterior sampling problem as a regularized Wasserstein gradient flow of the Kullback-Leibler divergence in the latent space. We demonstrate the performance of our method on standard benchmarks using StableDiffusion \citep{rombachhighresolutionimagesynthesis2022} as the prior.
\end{abstract}

\section{Introduction}
Inverse problems are ubiquitous in science and engineering. They involve finding the underlying true signal $x_0$ from the corrupted observation $y$. In this work, we are primarily concerned with inverse problems of the form:
\begin{equation}\label{eq:simple_inv}
    y=\mathcal{A}(x_0)+\epsilon, \qquad \epsilon \sim \mathcal{N}(0, \sigma_y^2 I),
\end{equation}
where $\mathcal{A}: \sX\to \sY$ is a known forward corruption operator and $\epsilon$ is the additive Gaussian noise. Through the lens of Bayesian inference \citep{stuart2010inverse}, the solution to the problem in \eqref{eq:simple_inv} can be elegantly viewed as sampling from the posterior $p(x_0|y) \propto p(y|x_0) p_{data}(x_0)$ by placing a prior on $x_0$. While traditional hand-crafted priors are often chosen for mathematical convenience, they struggle with high-dimensional, ill-posed inverse problems. 

In recent years, alternative priors based on diffusion models \citep{pmlr-v37-sohl-dickstein15,hodenoisingdiffusionprobabilistic2020, songscorebasedgenerativemodeling2021} have become the state-of-the-art for solving imaging inverse problems. A common strategy is to modify the sampling process of a \emph{pixel-space} diffusion model to guide the sampler towards the posterior $p(x_0|y)$ through various techniques, for instance, proximal \citep{zhuDenoisingDiffusionModels2023, wuDiffusionPosteriorProximal2024}, gradient-based \citep{chung2023diffusion, song2023pseudoinverseguided, boys2024tweedie}, variational inference \citep{mardaniVariationalPerspectiveSolving2023,zilbersteinRepulsiveLatentScore2024}, or sequential Monte-Carlo methods \citep{cardoso2024monte, dou2024diffusion,chenSolvingInverseProblems2025}.

However, adapting these techniques to more computationally efficient \emph{latent-space} diffusion models \citep{rombachhighresolutionimagesynthesis2022} is not straightforward, as the model defines a prior $p(z_0)$ on a latent variable instead of the ground truth signal $x_0$. Approaches such as \citet{routSolvingLinearInverse2023,songSolvingInverseProblems2024} adapt diffusion posterior sampling \citep{chung2023diffusion} to the latent space with data consistency regularization, while \citet{zilbersteinRepulsiveLatentScore2024} builds on RED-Diff \citep{mardaniVariationalPerspectiveSolving2023} and simulate a particle system for an augmented distribution $q(x_0, z_0|y)$ defined on the product of pixel and latent space.

In this work, we depart from adapting existing pixel-space methods. Instead, we formulate the posterior sampling problem from first principles as a Wasserstein gradient flow of a KL divergence functional directly in the latent space. Using the diffusion prior, we develop a regularized gradient flow, which we term \gls*{DWGF}. We derive a system of ordinary differential equations (ODEs) that approximates this flow, providing a principled method for solving inverse problems with latent diffusion priors.

\paragraph{Notations.} We use $\mathsf{X} = \mathbb{R}^{d_x}$ to denote the ambient (pixel) space, $\mathsf{Y}\subseteq \R^{d_x}$ the observation space, and $\mathsf{Z} = \mathbb{R}^{d_z}$ the latent space. We denote the space of probability measures on $\mathbb{R}^d$ with finite $q$-th moments as $\mathcal{P}_q(\mathbb{R}^d)$; in this work, we focus on the cases where $q=2$.

\section{A Wasserstein Gradient Flow Approach}
\label{sec:method}
Assume that we are given a pretrained latent diffusion model, which consists of a decoder $p_{\phi^-}(x_0|z_0)$, encoder $r_{\phi^-}(z_0 | x_0)$, and a diffusion generative model $p_{\theta^-}(z_0)$ in the latent space. Given the inverse problem defined in \eqref{eq:simple_inv}, we 
would like to leverage this model to obtain an approximate posterior distribution, denoted here
\begin{align}\label{eq:semi_implicit_posterior}
q_\mu(x_0 | y) = \int p_{\phi^-}(x_0|z_0)\mu(z_0|y)\dd z_0.
\end{align}
In order to obtain this distribution, one needs to approximate the unknown distribution $\mu(z_0 | y)$. We thus consider the following regularized optimization problem over $\mathcal{P}_2(\sZ)$:
\begin{equation}\label{eq:min_functional}
    \mu_\star(z_0 | y) \in \argmin_{\mu \in\mathcal{P}_2(\sZ)} \mathcal{F}[\mu(z_0 | y)] + \gamma \mathcal{R}[\mu(z_0|y)],
\end{equation}
where $\mathcal{F}[\mu] = \KL\left(q_\mu(x_0 | y) \|p(x_0|y)\right)$ and $\mathcal{R}:\mathcal{P}_2(\sZ)\to \R_+$ can be taken as any regularization that makes the minimizers of the gradient flow well-defined \citep{crucinioSolvingFredholmIntegral2024}, and $\gamma>0$ controls the strength of regularization. In order to solve the problem in \eqref{eq:min_functional}, we adopt a gradient descent approach in the space of probability measures . 
The gradient flow of any functional $\mathcal{L}[\mu]$ on $\mathcal{P}_2(\sZ)$ starting at some $\bar{\mu}_0$ is given by \citep{figalli2021invitation}: 
\begin{equation}\label{eq:transport_eq_main}
\frac{\partial\mu_t}{\partial t} = \nabla_{z_0}\cdot \left(\mu_t \nabla_{z_0} \frac{\delta\mathcal{L}[\mu]}{\delta \mu}\right), \qquad \mu_0 = \bar{\mu}_0
\end{equation}
The PDE above is the continuity equation corresponds to the following ODE \citep[Chapter 8]{ambrosio2005gradient}: 
\begin{equation}\label{eq:ode_flow_main}
    \frac{\dd z_{0,t}}{\dd t} = - \nabla_{z_0} \frac{\delta\mathcal{L}[\mu]}{\delta \mu} = - \left(\nabla_{z_0} \frac{\delta\mathcal{F}[\mu]}{\delta \mu} + \gamma \nabla_{z_0} \frac{\delta\mathcal{R}[\mu]}{\delta \mu}\right), \qquad z_{0,0}  \sim \bar{\mu}_0,
\end{equation}
 where we have set $\mathcal{L}[\mu] := \mathcal{F}[\mu] + \gamma \mathcal{R}[\mu]$. The solution to \eqref{eq:ode_flow_main} defines a flow map $T_t(z_{0,0})=z_t$ such that $\mu_t = (T_t)_\# \bar{\mu}_0$. To simulate \eqref{eq:ode_flow_main}, we need to compute the functional derivative ${\delta\mathcal{L}[\mu]}/{\delta \mu}$ and its gradient. In what follows, we will construct the regularization and derive the first variations.

\subsection{Wasserstein gradient of $\KL\left(q_\mu(x_0 | y) \|p(x_0|y)\right)$}
 Wasserstein-2 ($W_2$) gradient of the first term $\mathcal{F}[\mu]=\KL(q_\mu(x_0|y)\|p(x_0|y))$ can be derived similarly to \cite{wangProlificDreamerHighFidelityDiverse2023}. In particular, the first variation is given by (see \Cref{app:grad_flow_derivation_F} for the full derivation)
\begin{equation}\label{eq:fokkerplanck-dataflow}
\frac{\delta \mathcal{F}[\mu]}{\delta \mu} = \mathbb{E}_{p(x_0|z_0)}[\log q_\mu(x_0|y) - \log p(x_0|y)].
\end{equation}
Then to obtain the first term of the drift in the ODE \eqref{eq:ode_flow_main}, we compute 
\begin{align}\label{eq:F_gradient_of_first_variation}
\nabla_{z_0} \frac{\delta \mathcal{F}[\mu]}{\delta \mu} := \mathbb{E}_{\epsilon\sim \mathcal{N}(0,I)}\left[
    \nabla_{x_0} (\log q({g}_{\phi^-}(\epsilon,z_{0})|y) - \log p(g_{\phi^-}(\epsilon, z_{0})|y)) \frac{\partial \mathcal{D}_{\phi^-}(z_{0})}{\partial z_{0}}
    \right],
\end{align}
where we have used the chain rule and the reparameterization trick $g_{\phi^-}(\epsilon,z_0):=\mathcal{D}_{\phi^-}(z_0)+\rho \epsilon$ for the Gaussian decoder. As noted by \citet{wangsteindreamervariancereduction2024}, the first term $\nabla_{x_0} \log q({g}_{\phi^-}(\epsilon,z_{0})|y)$ is zero due to reparameterization (cf. \Cref{app:reparam}).

We note that the only intractable term in \eqref{eq:F_gradient_of_first_variation} is the posterior score $\nabla_{x_0}\log p(x_0|y)$, which admits the decomposition $\nabla_{x_0}\log p(x_0|y)=\nabla_{x_0} \log p(y|x_0)+\nabla_{x_0}\log p(x_0)$. While the first term $\nabla_{x_0} \log p(y|x_0)$ is tractable for being the gradient of the Gaussian likelihood \eqref{eq:simple_inv}, the data score $\nabla_{x_0}\log p(x_0)$ requires approximation. Assuming regularity conditions that permit interchanging the gradient and integral, we can express the prior score as an expectation, which we approximate using the encoder of the pretrained latent diffusion model:
\begin{align}\label{eq:data_score}
    \nabla_{x_0} \log p(x_0) 
    &= \int_\sZ [\nabla_{x_0} \log p(x_0|z_0)] p(z_0|x_0) \dd z_0 \approx \int_\sZ [\nabla_{x_0} \log p(x_0|z_0)] \tilde{r}_{\phi^-}(z_0|x_0) \dd z_0,
\end{align}
where $\tilde{r}_{\phi^-}(z_0|x_0)$ is the approximate posterior distribution given by a \gls*{VAE} \citep{kingma2013autoencodingvb}, which is part of the pretrained latent diffusion model. 

\subsection{Wasserstein gradient of $\mathcal{R}[\mu(z_0 | y)]$}
To fully leverage the information provided by the generative model, we first construct the prior regularization term $\mathcal{R}(\mu(z_0|y))$ based on the pretrained latent diffusion model. Analogous to \cite{wangProlificDreamerHighFidelityDiverse2023,luoDiffinstructUniversalApproach2024}, we consider a weighted KL divergence along the diffusion process:
\begin{equation}\label{eq:weighted_KL}
    \mathcal{R}(\mu(z_0|y)) := \KL^{w, [0,T]}(\mu(z_0|y)\| p_{\theta^-}(z_0))=\int_0^T w(s) \KL(\mu(z_s|y)\| p_{\theta^-}(z_s)) \dd s,
\end{equation}
where $w(s):[0,T]\to \R_+$ is a time-dependent weighting term and the densities $\mu(z_s|y):=\int_\sZ \mathcal{N}(z_s; \alpha_s z_0, \sigma_s^2I) \mu(z_0|y) \dd z_0$ are pushforwards of $\mu$ through the forward transition kernel of the diffusion model. We remark that the weighted KL divergence in \eqref{eq:weighted_KL} is a well-defined functional and admits favorable properties (cf. \Cref{app:proof_thm_kl}) as summarised below.
\begin{theorem}\label{thm:weighted_kl_properties}
    The weighted KL divergence $\KL^{w, [0,T]}(\mu(z_0|y)\| p_{\theta^-}(z_0))$ \eqref{eq:weighted_KL} is i) nonnegative, ii) convex in the first component $\mu(z_0|y)$, and iii)  is minimized if and only if the standard KL divergence $\KL(\mu(z_0|y)\|p_{\theta^-}(z_0))$ is minimized.
\end{theorem}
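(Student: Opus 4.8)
The plan is to handle the three assertions in increasing order of difficulty, in each case reducing the claim to an elementary property of the forward noising map $K_s\colon \mu(z_0\mid y)\mapsto \mu(z_s\mid y)=\int_\sZ \mathcal{N}(z_s;\alpha_s z_0,\sigma_s^2 I)\,\mu(z_0\mid y)\,\dd z_0$ combined with a standard fact about the ordinary KL divergence. A preliminary remark, which I would dispatch first (and relegate the details to \Cref{app:proof_thm_kl}), is that $s\mapsto \KL(\mu(z_s\mid y)\|p_{\theta^-}(z_s))$ is measurable, so that the integral defining $\mathcal{R}[\mu]$ makes sense in $[0,+\infty]$; this follows from weak continuity of $s\mapsto\mu(z_s\mid y)$ along the diffusion together with joint lower semicontinuity of the KL divergence. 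Given this, \textbf{(i)} is immediate: each integrand $\KL(\mu(z_s\mid y)\|p_{\theta^-}(z_s))$ is nonnegative by Gibbs' inequality and $w(s)\ge 0$, so the integral is nonnegative.

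For \textbf{(ii)}, the key point is that $K_s$ is \emph{linear} on $\mathcal{P}_2(\sZ)$: it is the composition of the deterministic dilation $z_0\mapsto\alpha_s z_0$ with convolution against the fixed Gaussian $\mathcal{N}(0,\sigma_s^2 I)$, and both operations act linearly on the input measure. Hence for $\mu=\lambda\mu^{(1)}+(1-\lambda)\mu^{(2)}$ one gets $\mu(z_s\mid y)=\lambda\,\mu^{(1)}(z_s\mid y)+(1-\lambda)\,\mu^{(2)}(z_s\mid y)$. Since $\nu\mapsto\KL(\nu\|p_{\theta^-}(z_s))$ is convex (it is even jointly convex in both arguments), the map $\mu\mapsto\KL(\mu(z_s\mid y)\|p_{\theta^-}(z_s))$ is a convex function precomposed with a linear map, hence convex; multiplying by $w(s)\ge0$ and integrating over $s\in[0,T]$ preserves convexity, so $\mathcal{R}$ is convex on the convex set $\mathcal{P}_2(\sZ)$.

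For \textbf{(iii)}, first note $p_{\theta^-}(z_0)\in\mathcal{P}_2(\sZ)$ and $K_s\,p_{\theta^-}(z_0)=p_{\theta^-}(z_s)$, so $\mathcal{R}[p_{\theta^-}]=0$; together with (i) this identifies $\inf_\mu\mathcal{R}[\mu]=0$ and shows that $\mu$ minimizes $\mathcal{R}$ iff $\mathcal{R}[\mu]=0$. So it suffices to prove $\mathcal{R}[\mu]=0\iff \KL(\mu(z_0\mid y)\|p_{\theta^-}(z_0))=0$, i.e. iff $\mu(z_0\mid y)=p_{\theta^-}(z_0)$. The direction $(\Leftarrow)$ is clear. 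For $(\Rightarrow)$: if $\mathcal{R}[\mu]=0$ then, the integrand being nonnegative and $w$ strictly positive on $(0,T]$ (or at least on a subset of $\{s:\sigma_s>0\}$ of positive Lebesgue measure), we get $\KL(\mu(z_s\mid y)\|p_{\theta^-}(z_s))=0$, hence $\mu(z_s\mid y)=p_{\theta^-}(z_s)$, for Lebesgue-almost every $s$. The crux is then injectivity of $K_s$ on probability measures whenever $\sigma_s>0$: taking characteristic functions, $\widehat{K_s\mu}(\xi)=\widehat{\mu}(\alpha_s\xi)\,e^{-\sigma_s^2\|\xi\|^2/2}$, and since the Gaussian factor never vanishes and $\alpha_s>0$, $K_s\mu=K_s p_{\theta^-}$ forces $\widehat{\mu}=\widehat{p_{\theta^-}}$, hence $\mu=p_{\theta^-}$. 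Picking any $s\in(0,T]$ in the full-measure set where $\mu(z_s\mid y)=p_{\theta^-}(z_s)$ (nonempty since $\{\sigma_s>0\}$ has full measure) finishes the proof.

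The only genuine content is the injectivity of the noising kernel in the last step, plus the minor bookkeeping needed to pass from "for almost every $s$" to a single exploitable $s$; parts (i) and (ii) and the measurability remark are routine, and I would keep them brief.
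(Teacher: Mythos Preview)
Your proof is correct and follows essentially the same route as the paper's: parts (i) and (ii) are reduced to standard properties of the KL divergence, and part (iii) is handled via characteristic functions, using that the forward process $z_s=\alpha_s z_0+\sigma_s\epsilon$ yields $\widehat{\mu(z_s\mid y)}(\xi)=\widehat{\mu(z_0\mid y)}(\alpha_s\xi)\,\widehat{\mathcal{N}(0,I)}(\sigma_s\xi)$ and the Gaussian factor never vanishes. If anything, you are more careful than the paper in two places: you make explicit the linearity of $K_s$ needed for convexity in $\mu(z_0\mid y)$ (the paper just says this ``follows from those of the KL divergence''), and you address the passage from ``$\mathcal{R}[\mu]=0$'' to ``equality at some exploitable $s$'' via an almost-everywhere argument rather than asserting equality at every $s$.
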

Accordingly, we obtain the gradient of the first variation of $\mathcal{R}$ (cf. \Cref{app:grad_flow_derivation_R}):
\begin{align}\label{eq:R_gradient_of_first_variation}
\nabla_{z_0} \frac{\delta \mathcal{R}[\mu]}{\delta \mu} = \mathbb{E}_{s,\epsilon} \left[\tilde{w}(s)
        \left(\nabla_{z_s} \log \int_\sZ p(z_{s}|z_{0}) \mu(z_{0,t}|y) \dd z_{0} - \nabla_{z_s} \log p_{\theta^-}(z_{s})\right) \frac{\partial z_{s}}{\partial z_{0}}
    \right],
\end{align}
where the expectation is taken over $s\sim \mathcal{U}(0,T), \ \epsilon\sim \mathcal{N}(0,I)$, we set $\tilde{w}(s):=Tw(s)$, and $p(z_s|z_0)=\mathcal{N}(z_s;\alpha_s z_0, \sigma_s^2I)$ is the forward kernel of the diffusion model. To approximate the integral in \eqref{eq:R_gradient_of_first_variation}, we use a particle-based approach to form a Monte-Carlo approximation using $\{z_{0}^{(i)}\}_{i} \sim \mu(z_{0}|y)$ \citep{wangProlificDreamerHighFidelityDiverse2023,limParticleSemiimplicitVariational2024}.

\subsection{Algorithmic Considerations}
\label{sec:algo_consi}
\textbf{Final Gradient Flow.} We can simply combine the gradients derived in \eqref{eq:F_gradient_of_first_variation} and \eqref{eq:R_gradient_of_first_variation} to obtain the final gradient flow in \eqref{eq:ode_flow_main}. As we have an intractable term, we will simulate $N$ identical copies of this ODE, where the integral in \eqref{eq:R_gradient_of_first_variation} is approximated using these particles as mentioned. We mention some practical aspects of this implementation below.

\paragraph{Deterministic Encoding} During our experiments, we observe that the diagonal log covariance matrix of the approximate Gaussian posterior only contains small values $(\approx -17)$, thus we may view the encoding process as deterministic and take $\mathcal{E}_{\phi^-}:\sX\to\sZ$ as a map to the mean of the Gaussian. In this case, \eqref{eq:data_score} reduces to $({1}/{\rho^2})[\mathbb{E}_{\tilde{r}_{\phi^-}(z_0|x_0)}[\mathcal{D}_{\phi^-}(z_0)]-x_0] \approx ({1}/{\rho^2})[\mathcal{D}_{\phi^-}(\mathcal{E}_{\phi}(x_0))-x_0]$, whose vector Jacobian product with the decoder gradient resembles the data consistency term $\nabla_{z_t} ||\mathbb{E}[z_0|z_t] - \mathcal{E}(\mathcal{D}(\mathbb{E}[z_0|z_t]))||^2$ introduced in \cite{routSolvingLinearInverse2023,songSolvingInverseProblems2024}.
\paragraph{Adaptive Optimizer} To accelerate convergence on the challenging optimization landscapes common in imaging inverse problems, we employ a non-standard discretization for the flow in \eqref{eq:ode_flow_main}. Instead of a simple Euler step, we treat the drift terms as gradients and apply the Adam optimizer \citep{kingma2017adammethodstochasticoptimization}. We discuss this choice in \Cref{app:numerics}.

\section{Experiments}
\label{sec:experiments}
We evaluate our method \gls*{DWGF} on the FFHQ dataset \citep{karras2019stylebasedgeneratorarchitecturegenerative} downsampled to a resolution of $512\times 512$. We compare with \gls*{PSLD} \citep{routSolvingLinearInverse2023}, one of the established baselines for solving inverse problems with a latent diffusion prior and \gls*{RLSD} \citep{zilbersteinRepulsiveLatentScore2024}, which is a recent method based on similar ideas from gradient flows. We adopt same the experimental settings as in \cite{zilbersteinRepulsiveLatentScore2024} and report the best results therein. In this work, we only evaluate our method on box inpainting and super-resolution. We report standard metrics: \gls*{PSNR}, \gls*{LPIPS} \citep{zhang2018unreasonableeffectivenessdeepfeatures}, and \gls*{FID} \citep{heusel2017gan}.
\begin{table*}[h]
\centering \setlength{\tabcolsep}{4pt}
\resizebox{\textwidth}{!}{
\begin{tabular}{lcccccc}
 & \multicolumn{3}{c}{\textbf{Inpainting (Box)}} & \multicolumn{3}{c}{\textbf{SR ($\times$8)}} \\ 
 \cmidrule(lr){2-4} \cmidrule(lr){5-7}
\textbf{Method} & \textbf{FID $\downarrow$} & \textbf{PSNR $\uparrow$} & \textbf{LPIPS $\downarrow$} & \textbf{FID $\downarrow$} & \textbf{PSNR $\uparrow$} & \textbf{LPIPS $\downarrow$}\\ \hline
PSLD \citep{routSolvingLinearInverse2023}     &   57.70  & 22.72   & 0.082   & 81.31   & 24.82   &   0.314  \\
RLSD \citep{zilbersteinRepulsiveLatentScore2024} &   29.18  & 24.98   & 0.079   &  65.42    &  28.39    &  0.286   \\ \hline
\textbf{Ours} &  118.05   &  27.56  &  0.184   &  101.94   &  32.71  &  0.193   \\
\bottomrule
\end{tabular}
}
\caption{Results for large box inpainting and $8\times$ super-resolution, all with noise $\sigma_y = 0.001$ on the FFHQ-512 validation dataset. 
}
\label{tab:512_comparison_FFHQ}
\end{table*}

\begin{figure}[h]
    \centering
    
    \begin{subfigure}{0.6\linewidth}
        \includegraphics[width=\linewidth]{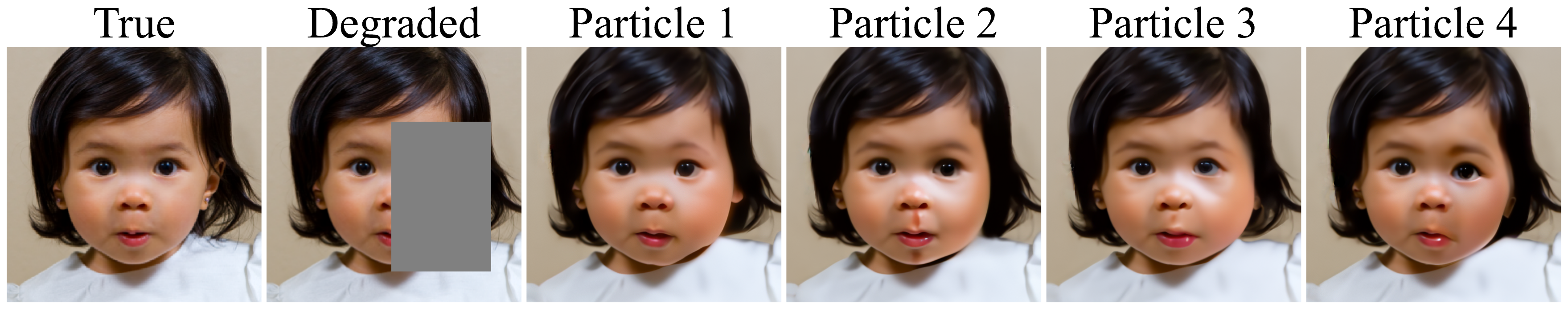}
        \caption{Diversity of the particles $x_0$ produced by DWGF on large box inpainting.}
        \label{fig:diverse}
    \end{subfigure}

    \begin{subfigure}{0.6\linewidth}
        \includegraphics[width=\linewidth]{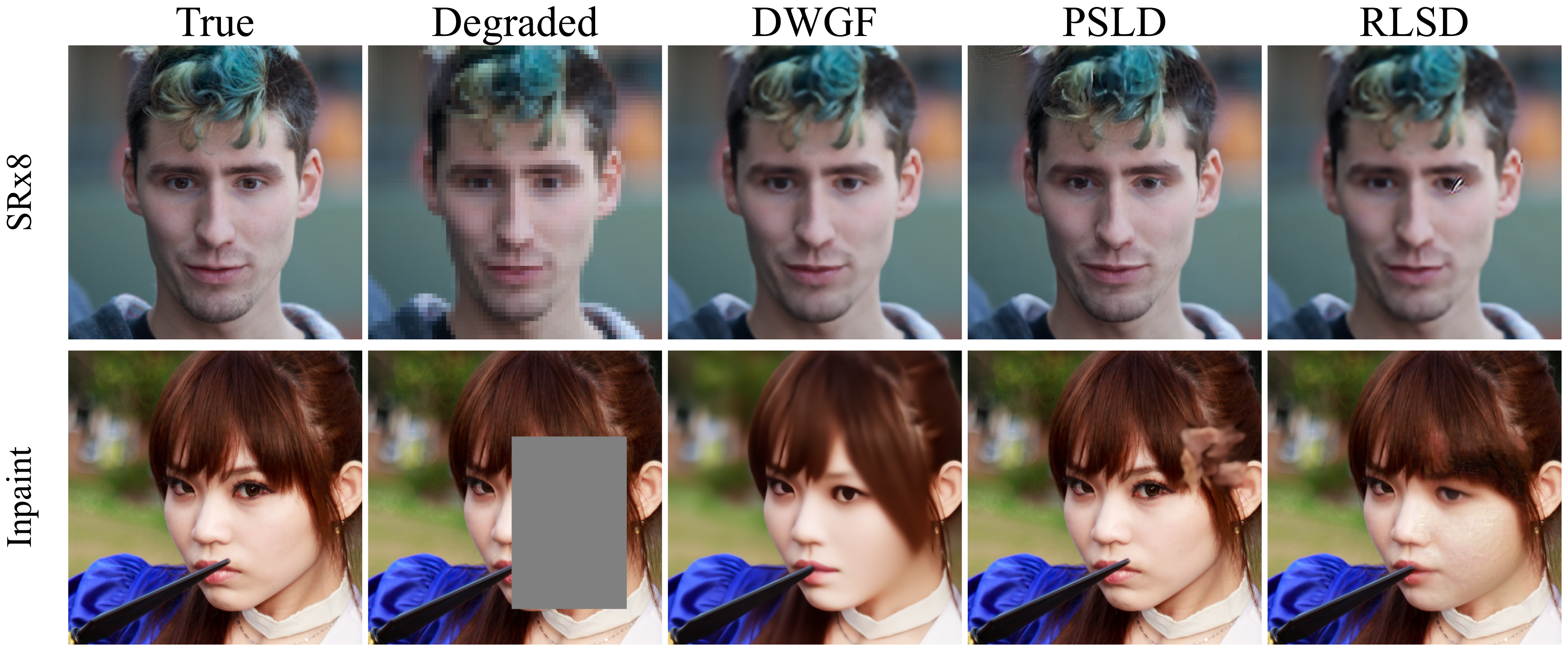}
        \caption{Qualitative comparison of the DWGF, PSLD, and RLSD on inpainting and super-resolution (8$\times$) tasks.}
        \label{fig:qualitative}
    \end{subfigure}
    
    \caption{Qualitative results on the FFHQ dataset downsampled to $512\times 512$ resolution.}
    \label{fig:combined} 
\end{figure}

As shown in the quantitative results (\Cref{tab:512_comparison_FFHQ}), our algorithm achieves comparable performance in terms of \gls*{PSNR} and \gls*{LPIPS} but suffers from poor \gls*{FID}. Examining the qualtitative results (\Cref{fig:diverse,fig:qualitative}), we conjecture that this is caused by the mode-seeking behavior of the KL divergence, leading to blurry reconstructions. We point out that additional regularizations such as entropic regularizations \citep{wangTamingModeCollapse2024} or repulsive potentials \citep{corsoParticleGuidanceNoniid2024,zilbersteinRepulsiveLatentScore2024,huKernelDensitySteering2025} can be incorporated into our functional in \eqref{eq:min_functional} to tackle mode collapse.

\section{Conclusion}
We proposed \gls*{DWGF}, a novel approach of solving inverse problems using latent diffusion models as priors. We note that our approach can be extended to the conditional setting where we jointly optimize the prompt embedding as in \cite{spagnolettiLATINOPROLAtentConsisTency2025} using Euclidean-Wasserstein gradient flows \citep{kuntz23a}. Additionally, it would be interesting to explore control variates \citep{wangsteindreamervariancereduction2024} for variance reduction. 

\section*{Acknowledgements}
We thank the anonymous reviewers for their constructive comments. TW is supported by the Roth
Scholarship from the Department of Mathematics, Imperial College London.

\bibliographystyle{apalike}  
\bibliography{bibliography} 

\newpage
\appendix
\section{Derivations}
\label{app:derivation}
\textbf{Notations.} We denote the space of probability measures on $\mathbb{R}^d$ with finite $q$-th moments as $\mathcal{P}_q(\mathbb{R}^d)$. We equip the space $\mathcal{P}_2(\R^d)$ with the usual Wasserstein-2 scalar product \citep{figalli2021invitation} and denote its tangent space as $T\mathcal{P}_2(\R^d)$. For simplicity, we assume all distributions considered in this work admit  differentiable densities with respect to the Lebesgue measure.

\subsection{Derivation of the first variation of $\mathcal{F}$}\label{app:grad_flow_derivation_F}
We first state a standard result on the first variation of the KL divergence (cf. \citet{villani2003, santambrogio2016euclideanmetricwasserstein, figalli2021invitation}):
\begin{lemma}\label{lemma:first_var_kl}
    For $p, q \in \mathcal{P}_2(\mathbb{R}^d)$, the first variation of the KL divergence is given by:
    \begin{equation}
        \frac{\delta \KL(q\|p)}{\delta q} [x] = \log q(x) - \log p(x) +1, \quad \forall x\in \R^d
    \end{equation}
\end{lemma}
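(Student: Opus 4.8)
The plan is to verify the formula by a direct Gâteaux-derivative computation from the definition of the first variation; this is classical, and the only point needing real care is the justification of differentiation under the integral sign.

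Recall that $\delta\mathcal{L}[q]/\delta q$ is characterized, up to an additive constant fixed by convention, by requiring that for every admissible perturbation $\chi = \tilde q - q$ with $\tilde q \in \mathcal{P}_2(\R^d)$ — so that $\int_{\R^d}\chi\,dx = 0$ and $q + \epsilon\chi \ge 0$ for small $\epsilon > 0$ — one has $\frac{d}{d\epsilon}\big|_{\epsilon = 0^+}\mathcal{L}[q+\epsilon\chi] = \int_{\R^d}\frac{\delta\mathcal{L}[q]}{\delta q}[x]\,\chi(x)\,dx$. I would apply this with $\mathcal{L} = \KL(\,\cdot\,\|p)$, writing $\KL(q\|p) = \int q\log q\,dx - \int q\log p\,dx$ and substituting $q_\epsilon := q + \epsilon\chi$.

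The linear term is immediate: $\frac{d}{d\epsilon}\big|_{0}\int q_\epsilon\log p\,dx = \int\chi\log p\,dx$ once $\chi\log p \in L^1$. For the entropy term, differentiating the integrand $q_\epsilon\log q_\epsilon$ in $\epsilon$ gives $\chi\log q_\epsilon + \chi$, so $\frac{d}{d\epsilon}\big|_{0}\int q_\epsilon\log q_\epsilon\,dx = \int \chi(\log q + 1)\,dx$. Combining the two pieces yields $\frac{d}{d\epsilon}\big|_{0}\KL(q_\epsilon\|p) = \int\chi(\log q - \log p + 1)\,dx$, and reading off the factor paired against $\chi$ gives $\delta\KL(q\|p)/\delta q = \log q - \log p + 1$. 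Since $\int\chi\,dx = 0$, the constant $+1$ does not affect the value of the pairing; it is the standard normalization, and it is precisely the representative obtained if the mass-preservation constraint on $\chi$ is dropped.

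The step requiring genuine care — and which I would write out in full — is the interchange of $d/d\epsilon$ and $\int(\cdot)\,dx$ for the entropy term, since $t\mapsto t\log t$ has derivative $\log t + 1$ that blows up as $t \to 0^+$ while $\chi$ need not be sign-definite where $q$ is small. Under the standing assumptions of \Cref{app:derivation} (differentiable densities, finite second moments, and implicitly finite entropies, so that $\log q \in L^1(q\,dx)$ and $|\chi\log q|, |\chi\log p| \in L^1$), I would justify it by dominated convergence: on $\{q > 0\}$ the difference quotients of the convex function $t\mapsto t\log t$ are monotone in $\epsilon$ and hence, for $\epsilon \in (0,\epsilon_0)$, dominated by an integrable majorant assembled from $|\chi|$, $q\log q$ and $q_{\epsilon_0}\log q_{\epsilon_0}$; on $\{q = 0\}$ one either restricts to perturbations with $\{\chi \ne 0\} \subseteq \{q > 0\}$ or works with one-sided derivatives. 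With this in hand the formal computation above is complete, and I would close by noting that the resulting formula is the classical one recorded in \citet{villani2003, santambrogio2016euclideanmetricwasserstein, figalli2021invitation}.
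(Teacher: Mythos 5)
Your proposal is correct and follows essentially the same route as the paper: both compute the Gâteaux derivative of $q \mapsto \KL(q\|p)$ along a perturbation, using the expansion of $t\mapsto t\log t$ to extract the integrand $\log q - \log p + 1$. The only difference is that you justify the interchange of derivative and integral (via convexity and dominated convergence), a point the paper's formal Taylor-expansion argument passes over silently.
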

\begin{proof}
    Using the equivalent definition of first variation, we can write for $m \in T\mathcal{P}_2(\R^d)$ and $t>0$:
    \begin{equation}
        \KL(q+tm\|p) = \KL(q\|p) + t\left\langle m, \frac{\delta \KL(q\|p)}{\delta q} \right\rangle  + o(t),
    \end{equation}
    where the inner product is defined $\langle m,f\rangle:=\int_{\R^d} f(z)m(z) \mathrm{d} z$ for all $f,m\in T\mathcal{P}_2(\R^d)$.
    Using Taylor expansion of $(z+t)\log(z+t)=z\log z+t(\log z+1)+o(t)$, we can write $\KL(q+tm\|p)$ as:
    \begin{align}
        \KL(q+tm\|p) &= \int (q(x)+tm(x)) [\log (q(x)+tm(x))-\log p(x)] \dd x \\
        &=\int q(x)\log q(x) \dd x - \int q(x) \log p(x) \dd x \\
        &+ t\int (\log q(x) - \log p(x) +1) m(x)\dd x   + o(t),
    \end{align}
    which shows the desired result by matching the terms.
\end{proof}
\begin{lemma}\label{lemma:semi-implicit}
    For $\mathcal{F}: \mathcal{P}_2(\sZ)\to \R_+$ with $\mathcal{F}[\mu] :=\int w(z)\mu(z) \dd z$ (cf. \Cref{sec:method}) for any fixed $w(z):\R^d \to \R$, we have:
    \begin{equation}
        \frac{\delta\mathcal{F}[\mu]}{\delta \mu}[z] = w(z)
    \end{equation}
\end{lemma}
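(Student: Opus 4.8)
The plan is to invoke the definition of the first variation used in \Cref{lemma:first_var_kl} and exploit the fact that $\mathcal{F}[\mu]=\int w(z)\mu(z)\dd z$ is \emph{linear} in $\mu$, so that no Taylor expansion is needed at all. Concretely, fix $\mu\in\mathcal{P}_2(\sZ)$ and a perturbation direction $m\in T\mathcal{P}_2(\sZ)$ (so in particular $\int m(z)\dd z=0$), and for $t>0$ consider $\mu+tm$. By additivity of the integral,
\begin{equation}
\mathcal{F}[\mu+tm]=\int w(z)\big(\mu(z)+tm(z)\big)\dd z=\int w(z)\mu(z)\dd z + t\int w(z)m(z)\dd z=\mathcal{F}[\mu]+t\langle m,w\rangle,
\end{equation}
with the pairing $\langle m,f\rangle:=\int_{\sZ} f(z)m(z)\dd z$ exactly as in \Cref{lemma:first_var_kl}. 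Comparing this with the defining relation $\mathcal{F}[\mu+tm]=\mathcal{F}[\mu]+t\langle m,\delta\mathcal{F}[\mu]/\delta\mu\rangle+o(t)$ and matching the coefficient of $t$ (here the $o(t)$ term is identically zero), we read off $\delta\mathcal{F}[\mu]/\delta\mu[z]=w(z)$.

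The few points I would be careful about, none of which is a genuine obstacle: (i) one should assume $w\in L^1(\dd|m|)$, or more simply that $w$ has at most quadratic growth, so that the pairing $\langle m,w\rangle$ is finite for every $m\in T\mathcal{P}_2(\sZ)$ and the computation above is legitimate; (ii) the first variation is only defined up to an additive constant, since $\langle m,\mathrm{const}\rangle=0$ for tangent directions $m$ with zero mean, so strictly speaking the claim is that $w$ is a valid representative of the equivalence class $\delta\mathcal{F}[\mu]/\delta\mu$ — this is the same convention already adopted in \Cref{lemma:first_var_kl}, where the "$+1$" is carried along.

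Thus the only real content is the one-line linearity computation, and I do not anticipate any hard step; the lemma is essentially a formal consequence of the definitions. If desired, one can also note that this lemma, combined with \eqref{eq:fokkerplanck-dataflow} written in the form $\mathcal{F}[\mu]=\int \big(\mathbb{E}_{p(x_0\mid z_0)}[\log q_\mu(x_0\mid y)-\log p(x_0\mid y)]\big)\mu(z_0\mid y)\dd z_0$ (treating the $q_\mu$ inside the expectation as frozen, cf. \Cref{app:reparam}), immediately reproduces \eqref{eq:fokkerplanck-dataflow}, which is the reason the lemma is stated here.
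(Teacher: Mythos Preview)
Your proof is correct and matches the paper's own argument essentially line for line: both expand $\mathcal{F}[\mu+tm]$ by linearity of the integral, identify the coefficient of $t$ as $\langle m,w\rangle$, and read off the first variation. Your additional remarks on integrability and the additive-constant ambiguity are reasonable caveats but go beyond what the paper records.
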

\begin{proof}
    Similar to the proof above, for any $m \in T\mathcal{P}_2(\sZ)$ and $t\in \R$, we have:
    \begin{align}
        \mathcal{F}[\mu+tm] &= \int w(z) (\mu(z) + tm(z)) \dd z \\
        &= \int w(z) \mu(z) \dd z + t \int w(z) m(z) \dd z.
    \end{align}
    From the definition of the functional derivative, this implies the desired result.
\end{proof}
\begin{proposition}\label{prop:kl_latent}
    For $q_\mu(x_0|y) = \int p(x_0|z_0) \mu(z_0|y) \dd z_0$ and $p(x_0|y)$ both in $\mathcal{P}_2(\mathbb{R}^d)$ we have:
    \begin{equation}
        \frac{\delta \KL(q_\mu(x_0|y)\|p(x_0|y))}{\delta \mu} = \mathbb{E}_{p(x_0|z_0)}[\log q_\mu(x_0|y) - \log p(x_0|y)]
    \end{equation}
\end{proposition}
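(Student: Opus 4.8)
The statement is a chain rule for first variations: the functional $\mu \mapsto \KL(q_\mu(x_0|y)\|p(x_0|y))$ is the composition of the \emph{linear} map $\mu \mapsto q_\mu$ with the KL functional in its first argument, whose first variation is given by \Cref{lemma:first_var_kl}. The plan is to compute the first variation directly from its definition by perturbing $\mu$, pushing the perturbation through the (linear) decoder marginalization, invoking \Cref{lemma:first_var_kl}, and exchanging the order of integration.

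\textbf{Step 1 (push the perturbation through $\mu\mapsto q_\mu$).} Fix a tangent vector $m \in T\mathcal{P}_2(\sZ)$, so $\int m(z_0|y)\dd z_0 = 0$, and $t>0$. Since $q_\mu(x_0|y)=\int p(x_0|z_0)\mu(z_0|y)\dd z_0$ is linear in $\mu$, we have $q_{\mu+tm}(x_0|y) = q_\mu(x_0|y) + t\,\tilde m(x_0)$ with $\tilde m(x_0):=\int p(x_0|z_0)\,m(z_0|y)\dd z_0$. Because $p(\cdot|z_0)$ is a probability density, $\int \tilde m(x_0)\dd x_0 = \int m(z_0|y)\dd z_0 = 0$, and the moment assumptions give $\tilde m \in T\mathcal{P}_2(\R^{d_x})$; this is precisely the linearity exploited in \Cref{lemma:semi-implicit}.

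\textbf{Step 2 (apply \Cref{lemma:first_var_kl} and Fubini).} Applying \Cref{lemma:first_var_kl} with first argument $q_\mu$ and perturbation $\tilde m$,
\begin{align}
\KL(q_{\mu+tm}\|p) &= \KL(q_\mu\|p) + t\int \bigl(\log q_\mu(x_0|y) - \log p(x_0|y) + 1\bigr)\,\tilde m(x_0)\dd x_0 + o(t) \\
&= \KL(q_\mu\|p) + t\int \Bigl[\int \bigl(\log q_\mu(x_0|y) - \log p(x_0|y) + 1\bigr)p(x_0|z_0)\dd x_0\Bigr]m(z_0|y)\dd z_0 + o(t),
\end{align}
where the last line uses Fubini to swap the order of integration. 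Matching against the definition $\KL(q_{\mu+tm}\|p) = \KL(q_\mu\|p) + t\langle m, \delta\KL/\delta\mu\rangle + o(t)$ identifies $\delta\KL(q_\mu\|p)/\delta\mu\,[z_0] = \mathbb{E}_{p(x_0|z_0)}[\log q_\mu(x_0|y)-\log p(x_0|y)] + 1$, and the additive constant is annihilated by any zero-mean $m$, so it may be dropped (first variations being defined only up to a constant), yielding the claim.

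\textbf{Main obstacle.} The content is entirely in the analytic bookkeeping rather than the algebra: one must check that the first-order expansion from \Cref{lemma:first_var_kl} is valid here (i.e. the $o(t)$ is genuine once $\tilde m$ is substituted) and that Fubini applies, i.e. that $(x_0,z_0)\mapsto |\log q_\mu(x_0|y)-\log p(x_0|y)|\,p(x_0|z_0)\,|m(z_0|y)|$ is integrable. Under the standing assumptions of the appendix (differentiable densities, finite second moments, and enough tail/integrability control to interchange limits and integrals) these are routine, and I would record them as hypotheses rather than verify them in detail.
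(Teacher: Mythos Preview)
Your proof is correct and follows essentially the same approach as the paper: both compose the linear map $\mu\mapsto q_\mu$ (whose first variation is the kernel $p(x_0|z_0)$, the content of \Cref{lemma:semi-implicit}) with the KL first variation from \Cref{lemma:first_var_kl}. The only cosmetic difference is that the paper invokes the chain rule for functional derivatives by citation, whereas you unroll it explicitly via the perturbation $\tilde m$ and Fubini---and you are slightly more careful in tracking and then dropping the additive $+1$, which the paper omits silently.
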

\begin{proof}
    Using the chain rule of functional derivatives \citep[Appendix A.3]{EngelEberhard2011DFTA} and the two lemmas above, we obtain:
    \begin{align}
        \frac{\delta \KL(q_\mu(x_0|y)\|p(x_0|y))}{\delta \mu} &= \int \frac{\delta \KL(q_\mu(x_0|y)\|p(x_0|y))}{\delta q}[x] \cdot \frac{\delta q_\mu(x_0|y)}{\delta \mu}[z_0] \dd x \\
        &= \int [\log q_\mu(x_0|y)-\log p(x_0|y)] p(x_0|z_0) \dd x,
    \end{align}
    where we have set $w(z_0)=p(x_0|z_0)$ in \Cref{lemma:semi-implicit} as the Gaussian decoder distribution.
\end{proof}

We now derive the gradient of the first variation.
\begin{proof}[Proof of \eqref{eq:F_gradient_of_first_variation}]  
By \Cref{prop:kl_latent} above, we see that 
\begin{equation}
    \frac{\delta \KL(q_\mu(x_0|y)\|p(x_0|y))}{\delta \mu} = \mathbb{E}_{p(x_0|z_0)}[\log q_\mu(x_0|y)-\log p(x_0|y)].
\end{equation}
Assume standard regularity assumptions, we can exchange the expectation and the gradient. An application of the reparameterization trick $g_{\phi^-}(\epsilon,z_0):=\mathcal{D}_{\phi^-}(z_0)+\rho \epsilon$ yields: 
\begin{align}
    \nabla_{z_0} \frac{\delta \mathcal{F}[\mu]}{\delta \mu} &= 
    \nabla_{z_0}\mathbb{E}_{p(z_0|x_0)}[\log q_\mu(x_0|y)-\log p(x_0|y)] \\
    &=\mathbb{E}_{\epsilon\sim \mathcal{N}(0,I)} [\nabla_{z_0}(\log q_\mu({g}_{\phi^-}(\epsilon,z_{0,t})|y) - \log p(g_{\phi^-}(\epsilon, z_{0,t})|y))] \\
    &=\mathbb{E}_{\epsilon\sim \mathcal{N}(0,I)}\left[
    \nabla_{x_0} (\log q_\mu({g}_{\phi^-}(\epsilon,z_{0,t})|y) - \log p(g_{\phi^-}(\epsilon, z_{0,t})|y)) \frac{\partial \mathcal{D}_{\phi^-}(z_{0,t})}{\partial z_{0,t}}
    \right]
\end{align}
where we have used the chain rule in the last step. 
\end{proof}

\subsection{Derivation of the first variation of $\mathcal{R}$}\label{app:grad_flow_derivation_R}
Similar to \Cref{lemma:first_var_kl} above, we can derive the first variation of the weighted KL divergence \eqref{eq:weighted_KL}. Recall that the weighted KL divergence for $p, q \in \mathcal{P}_2(\mathbb{R}^d)$ is given by:
    \begin{equation}
        \KL^{w, [0,T]}(q\|p) = \int_{[0,T]} w(s) \KL(q_s\| p_{s}) \dd s,
    \end{equation}
where both $q_s$ and $p_s$ are defined as pushforwards through a Markov kernel: $q_s(x_s):= \int p(x_s|x_0)q(x_0) \dd x_0$, where we define $p(x_s|x_0):=\mathcal{N}(x_s;\alpha_s x_0, \sigma_s^2I)$.

\begin{proposition}\label{prop:first_var_weighted_kl}
     For $p, q \in \mathcal{P}_2(\mathbb{R}^d)$, the first variation of the weighted KL divergence is given by:
    \begin{equation}
        \frac{\delta \KL^{w, [0,T]}(q\|p)}{\delta q} [x] = \int_{[0,T]}w(s)\mathbb{E}_{p(x_s|x_0)}[\log q_s - \log p_s]] \dd s, \quad \forall x\in \R^d
    \end{equation}    
\end{proposition}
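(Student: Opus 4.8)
The plan is to reduce Proposition~\ref{prop:first_var_weighted_kl} to the machinery already developed for the unweighted case in \Cref{lemma:first_var_kl} and the chain rule for functional derivatives used in \Cref{prop:kl_latent}. First I would fix a test perturbation $m \in T\mathcal{P}_2(\R^d)$ and $t > 0$, and write out $\KL^{w,[0,T]}(q + tm \| p) = \int_{[0,T]} w(s) \KL((q+tm)_s \| p_s)\,\dd s$. The key observation is that the pushforward operation is \emph{linear} in its argument: since $(q+tm)_s(x_s) = \int p(x_s|x_0)(q(x_0) + t m(x_0))\,\dd x_0 = q_s(x_s) + t\, m_s(x_s)$, where $m_s$ is the pushforward of $m$ through the same Gaussian kernel. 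Hence for each fixed $s$ we may apply \Cref{lemma:first_var_kl} to the inner divergence with base point $q_s$ and perturbation $m_s$, giving $\KL(q_s + t m_s \| p_s) = \KL(q_s\|p_s) + t\langle m_s, \log q_s - \log p_s + 1\rangle + o(t)$.

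Next I would integrate this expansion against $w(s)\,\dd s$ over $[0,T]$ and interchange the $t$-derivative with the $s$-integral (justified by the standing regularity assumptions on densities, dominated convergence, and integrability of $w$), so that $\frac{\dd}{\dd t}\big|_{t=0}\KL^{w,[0,T]}(q+tm\|p) = \int_{[0,T]} w(s)\langle m_s, \log q_s - \log p_s + 1\rangle\,\dd s$. The final manipulation is to move the pushforward off of $m$ and onto the integrand via the adjoint (transpose) of the Markov kernel: $\langle m_s, f_s\rangle = \int f_s(x_s)\big(\int p(x_s|x_0) m(x_0)\,\dd x_0\big)\,\dd x_s = \int\big(\int p(x_s|x_0) f_s(x_s)\,\dd x_s\big) m(x_0)\,\dd x_0 = \langle m, \mathbb{E}_{p(x_s|x_0)}[f_s]\rangle$ by Fubini. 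Applying this with $f_s = \log q_s - \log p_s + 1$ and collecting the $s$-integral inside, we read off the first variation as $\int_{[0,T]} w(s)\,\mathbb{E}_{p(x_s|x_0)}[\log q_s - \log p_s + 1]\,\dd s$, matching the claimed expression (the additive constant $\int_{[0,T]} w(s)\,\dd s$ being immaterial to the Wasserstein gradient, exactly as the $+1$ in \Cref{lemma:first_var_kl}; if the paper's convention absorbs it, I would note this, since the displayed formula omits it).

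I expect the main obstacle to be the rigorous justification of the two interchanges of limit/integral: differentiating under the $\dd s$-integral, and the Fubini swap that transposes the Markov kernel. Both require controlling $\log q_s$ and $\log p_s$ near the boundary of the support and ensuring $w$ is integrable against the resulting bounds; since $p(x_s|x_0)$ is Gaussian with $\sigma_s > 0$ the pushforwards $q_s, p_s$ are smooth and strictly positive, which makes the logarithms locally well-behaved, but uniform-in-$s$ integrability as $\sigma_s \to 0$ (if $s \to 0$ is included) is the delicate point. Given the excerpt's blanket assumption that all densities are differentiable and the standing ``standard regularity assumptions'' invoked elsewhere, I would state these as hypotheses and carry out the formal computation, mirroring the level of rigor in \Cref{lemma:first_var_kl} and \Cref{prop:kl_latent}.
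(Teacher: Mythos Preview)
Your proposal is correct and arrives at the right answer, but the route differs from the paper's. The paper's proof is a two-line argument that invokes the chain rule for functional derivatives, treating $\KL(q_s\|p_s)$ as the intermediate quantity: it applies \Cref{lemma:semi-implicit} to obtain $\frac{\delta \KL^{w,[0,T]}}{\delta \KL(q_s\|p_s)}[s] = w(s)$, and then cites \Cref{prop:kl_latent} (with the decoder kernel replaced by the forward diffusion kernel $p(x_s|x_0)$) to obtain $\frac{\delta \KL(q_s\|p_s)}{\delta q}[x_0] = \mathbb{E}_{p(x_s|x_0)}[\log q_s - \log p_s]$. You instead compute the G\^ateaux derivative directly: expand $\KL^{w,[0,T]}(q+tm\|p)$ using the linearity of the pushforward, apply \Cref{lemma:first_var_kl} pointwise in $s$, and then make the adjoint/Fubini step explicit to transfer the kernel from $m$ onto the integrand. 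Your approach is more elementary and self-contained (it does not rely on a functional chain rule whose validity in this setting is asserted rather than proved), and it surfaces the two interchange-of-limit steps that the paper's chain-rule invocation hides. The paper's approach is shorter and leverages the earlier lemmas more directly. Your observation about the additive constant $\int_{[0,T]} w(s)\,\dd s$ is also correct; the paper silently drops the $+1$ already in \Cref{prop:kl_latent}, consistent with its downstream use in the Wasserstein gradient.
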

\begin{proof}
    Applying the chain rule for functional derivatives again, we obtain:
    \begin{align}
        \frac{\delta \KL^{w, [0,T]}(q\|p)}{\delta q} [x] &= \int \frac{\delta \KL^{w, [0,T]}(q\|p)}{\delta \KL(q_s\| p_{s})} [s] \frac{\delta \KL(q_s\| p_{s})}{\delta q}[z] \dd s \\
        &= \int w(s) \mathbb{E}_{p(x_s|x_0)}[\log q_s  - \log p_s] \dd s
    \end{align}
    where we have used \Cref{lemma:semi-implicit} to obtain the first variation of the first term and the proof of \Cref{prop:kl_latent} for the second term.
\end{proof}

We can now derive the gradient of the first variation in \eqref{eq:R_gradient_of_first_variation}.
\begin{proof}[Proof of \eqref{eq:R_gradient_of_first_variation}]
    Substituting the definition for $p=p_{\theta^-}(z_0),q=\mu(z_0|y)$ with $p(z_s|z_0)=\mathcal{N}(z_s;\alpha_s z_0, \sigma_s^2I)$ into \Cref{prop:first_var_weighted_kl} and taking the gradient yields:
    \begin{align}
        \nabla_{z_0} \frac{\delta \mathcal{R}[\mu]}{\delta \mu} &= \nabla_{z_0}\int_{[0,T]}  w(s) \mathbb{E}_{p(z_s|z_0)}\left[
        \left(\log \mu(z_s|y) - \log p_{\theta^-}(z_{s})\right) 
    \right] \dd s  \\
    &=\int_{[0,T]} w(s) \nabla_{z_0} \mathbb{E}_{\epsilon\sim \mathcal{N}(0,I)}\left[
        \left(\log \mu(g_s(\epsilon, z_0)|y) - \log p_{\theta^-}(g_s(\epsilon, z_0))\right) 
    \right] \dd s  \\
    &=\int_{[0,T]} w(s) \mathbb{E}_{\epsilon\sim \mathcal{N}(0,I)}\left[
        \left(\nabla_{z_0} \log \mu(g_s(\epsilon, z_0)|y) - \nabla_{z_0} \log p_{\theta^-}(g_s(\epsilon, z_0))\right) 
    \right] \dd s  \\
        &=\int_{[0,T]} w(s)  \mathbb{E}_{\epsilon\sim \mathcal{N}(0,I)}\left[
        \left(\nabla_{z_s} \log \mu(g_s(\epsilon, z_0)|y) - \nabla_{z_s} \log p_{\theta^-}(g_s(\epsilon, z_0))\right) \frac{\partial z_{s}}{\partial z_{0}}
    \right],
    \end{align}
where the second line follows from the reparameterization trick $g_s(\epsilon, z_0) = \alpha_s z_0 + \sigma_s \epsilon$, the third line follows from an exchange of the gradient and the integral, and we used the chain rule in the final line. 

The marginal density $\mu(g_s(\epsilon,z_0)|y)=\mu(z_s|y)$ can be approximated using an integral $\int_\sZ p(z_{s}|z_{0}) \mu(z_{0,t}|y) \dd z_{0}$, whence \eqref{eq:R_gradient_of_first_variation} follows.
\end{proof}

\subsection{Decoder Reparameterization}
\label{app:reparam}
We now show that the gradient $\nabla_{z_0}\mathbb{E}_{p(x_0|z_0)}[\log q_\mu(x_0|y)]$ is in fact zero with a reparametrizable Gaussian decoder $p(x_0|z_0)$ \citep{wangsteindreamervariancereduction2024}. Assuming standard regularity conditions, we have:
\begin{align}
    \nabla_{z_0}\mathbb{E}_{p(x_0|z_0)}[\log q_\mu(x_0|y)] &= \mathbb{E}_{\epsilon \sim \mathcal{N}(0,I)}\left[\nabla_{z_0}
    \log \mathcal{N}(x_0; \mathcal{D}_{\phi^-}(z_0), \rho^2I)
    \right] \\
    &=\mathbb{E}_{\epsilon \sim \mathcal{N}(0,I)}\left[\nabla_{z_0}
    -\frac{1}{2\rho^2} \|\mathcal{D}_{\phi^-}(z_0) + \rho \epsilon - \mathcal{D}_{\phi^-}(z_0)\|^2
    \right] =0.
\end{align}

\subsection{Proof to \Cref{thm:weighted_kl_properties}}
\label{app:proof_thm_kl}
We restate the theorem here for convenience:
\begin{theorem}
    The weighted KL divergence $\KL^{w, [0,T]}(\mu(z_0|y)\| p_{\theta^-}(z_0))$ \eqref{eq:weighted_KL} is i) nonnegative, ii) convex in the first component $\mu(z_0|y)$, and iii)  is minimized if and only if the standard KL divergence $\KL(\mu(z_0|y)\|p_{\theta^-}(z_0))$ is minimized.
\end{theorem}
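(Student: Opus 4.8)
The plan is to prove the three claims in sequence, as each is essentially inherited from the corresponding property of the standard KL divergence together with the nonnegativity of the weight $w$ and the structure of the pushforward maps.

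First, for nonnegativity: since $w(s) \geq 0$ for all $s \in [0,T]$ and each integrand $\KL(\mu(z_s|y)\|p_{\theta^-}(z_s)) \geq 0$ by Gibbs' inequality, the integral $\int_0^T w(s)\KL(\mu(z_s|y)\|p_{\theta^-}(z_s))\,\dd s$ is nonnegative. This is immediate.

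Second, for convexity in $\mu(z_0|y)$: I would fix $s$ and observe that the map $\mu(z_0|y) \mapsto \mu(z_s|y) = \int_\sZ \mathcal{N}(z_s;\alpha_s z_0,\sigma_s^2 I)\mu(z_0|y)\,\dd z_0$ is affine (linear) in $\mu$, being a pushforward through the fixed Markov kernel $p(z_s|z_0)$. The standard KL divergence $(q,p)\mapsto \KL(q\|p)$ is jointly convex, hence convex in its first argument with the second fixed; precomposing a convex functional with an affine map preserves convexity, so $\mu \mapsto \KL(\mu(z_s|y)\|p_{\theta^-}(z_s))$ is convex for each $s$. Multiplying by the nonnegative scalar $w(s)$ preserves convexity, and integrating a family of convex functionals over $s$ (a nonnegative combination, in the limit) again yields a convex functional. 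So $\mathcal{R}$ is convex in $\mu(z_0|y)$.

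Third, for the ``minimized iff'' claim: the key point is that $\KL(\mu(z_0|y)\|p_{\theta^-}(z_0)) = 0 \iff \mu(z_0|y) = p_{\theta^-}(z_0)$, and in that case $\mu(z_s|y) = p_{\theta^-}(z_s)$ for every $s$ (both sides are the pushforward of the same measure through the same kernel), so every integrand vanishes and $\mathcal{R} = 0$, which is the minimum by part i). Conversely, if $\mathcal{R}[\mu] = 0$, then since $w(s) > 0$ (the weighting is strictly positive, being valued in $\R_+$) and the integrand is continuous and nonnegative in $s$, we get $\KL(\mu(z_s|y)\|p_{\theta^-}(z_s)) = 0$ for a.e. $s$, hence $\mu(z_s|y) = p_{\theta^-}(z_s)$; taking $s \to 0$ (where $\alpha_s \to 1$, $\sigma_s \to 0$, so the kernel tends to the identity and $\mu(z_s|y) \to \mu(z_0|y)$, $p_{\theta^-}(z_s) \to p_{\theta^-}(z_0)$) yields $\mu(z_0|y) = p_{\theta^-}(z_0)$, i.e. $\KL(\mu(z_0|y)\|p_{\theta^-}(z_0)) = 0$.

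The main obstacle I anticipate is the rigor of the $s \to 0$ limiting argument in part iii): one must justify that $\mu(z_s|y) \to \mu(z_0|y)$ in a sense strong enough to pass to the limit inside the KL (e.g. weak convergence plus lower semicontinuity of KL, or continuity of $s \mapsto \KL(\mu(z_s|y)\|p_{\theta^-}(z_s))$ on $[0,T]$), and that the a.e.-vanishing set actually accumulates at $0$. A cleaner alternative is to invoke continuity of $s \mapsto \KL(\mu(z_s|y)\|p_{\theta^-}(z_s))$ directly, so that $\mathcal{R}[\mu] = 0$ with $w > 0$ forces the integrand to vanish identically on $[0,T]$, including at any $s$ bounded away from the degenerate endpoint; the data-processing inequality then gives $\KL(\mu(z_0|y)\|p_{\theta^-}(z_0)) \leq \KL(\mu(z_s|y)\|p_{\theta^-}(z_s))/(\text{something})$ — more simply, data processing gives $\KL(\mu(z_s|y)\|p_{\theta^-}(z_s)) \leq \KL(\mu(z_0|y)\|p_{\theta^-}(z_0))$, which runs the wrong way, so the limiting argument at $s=0$ (or a lower-semicontinuity argument) genuinely seems to be the crux and should be stated carefully.
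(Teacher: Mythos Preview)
Your treatment of parts i) and ii) is correct and in fact more detailed than the paper's, which simply remarks that these follow from the corresponding properties of the standard KL divergence.

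For part iii), your proposal is correct but takes a different route than the paper. You argue by sending $s\to 0$, using that the Gaussian forward kernel degenerates to the identity so that $\mu(z_s|y)\to\mu(z_0|y)$ and $p_{\theta^-}(z_s)\to p_{\theta^-}(z_0)$ weakly; equality of the $s>0$ marginals then passes to the limit. This works, though as you note it requires care about continuity/lower semicontinuity and about the a.e.\ set accumulating at $0$. The paper avoids this limiting procedure entirely by a characteristic-function argument at a \emph{single} fixed $s$: since $z_s=\alpha_s z_0+\sigma_s\epsilon$ with $\epsilon\sim\mathcal{N}(0,I)$, one has
\[
\varphi_{\mu(z_s|y)}(u)=\varphi_{\mu(z_0|y)}(\alpha_s u)\,\varphi_{\mathcal{N}(0,I)}(\sigma_s u),\qquad
\varphi_{p_{\theta^-}(z_s)}(u)=\varphi_{p_{\theta^-}(z_0)}(\alpha_s u)\,\varphi_{\mathcal{N}(0,I)}(\sigma_s u).
\]
Equality of $\mu(z_s|y)$ and $p_{\theta^-}(z_s)$ therefore implies $\varphi_{\mu(z_0|y)}(\alpha_s u)=\varphi_{p_{\theta^-}(z_0)}(\alpha_s u)$ for all $u$ (the Gaussian factor being nonzero), and since $\alpha_s>0$ this gives $\varphi_{\mu(z_0|y)}\equiv\varphi_{p_{\theta^-}(z_0)}$, hence $\mu(z_0|y)=p_{\theta^-}(z_0)$. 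This is exactly the ``deconvolution'' step that the data-processing inequality cannot supply, and it sidesteps your $s\to 0$ analysis; the trade-off is that it is specific to the Gaussian (or more generally, injective-in-Fourier) forward kernel, whereas your limit argument would apply to any kernel family contracting to the identity.
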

\begin{proof}
    We note that the first two properties follows from those of the KL divergence. For the third property, we note that the weighted KL is zero if and only if $\KL(\mu(z_t|y)\|p_{\theta^-}(z_t))=0$ for every $t$, since the weights $w(t)$ are nonnegative. Now using the argument in \citet{wangProlificDreamerHighFidelityDiverse2023}, we see that $\mu(z_t|y)=p_{\theta^-}(z_t)$ if and only if their characteristic functions are equal $\varphi_{\mu(z_t|y)}(s)=\varphi_{p_{\theta^-}(z_t)}(s)$. But we have:
    \begin{align}
        \varphi_{\mu(z_t|y)}(s) &= \varphi_{\mu(z_0|y)}(\alpha_t s) \cdot \varphi_{\mathcal{N}(0,I)}(\sigma_ts) \\
        \varphi_{p_{\theta^-}(z_t)}(s) &= \varphi_{p_{\theta^-}(z_0)}(\alpha_t s) \cdot \varphi_{\mathcal{N}(0,I)}(\sigma_ts),
    \end{align}
    since we have from the forward process $z_t = \alpha_t z_0+\sigma_t \epsilon$ for $\epsilon \sim \mathcal{N}(0,I)$. The third property thus follows from the positivity of the KL divergence.
\end{proof}

\section{Algorithm}
We now give the pseudocode for simulating the gradient flow in \Cref{algo:diff_kl} below; we use the notation $z_{s,k}$ to denote the particles diffused to the time $s\in [0,T]$ and at step $k \in \mathbb{N}$ of the Euler discretization of the ODE \eqref{eq:ode_flow_main}. Note that instead of approximating the drift corresponding to the diffusion regularization in \eqref{eq:R_gradient_of_first_variation} by sampling time $s$ uniformly from $[0,T]$, we follow \citet{mardaniVariationalPerspectiveSolving2023} and adopt a deterministic schedule for sampling $s$ to be $s\in \{T, T-1, \cdots, 0\}$. In our experiments, we take $w(s)= c\sigma_s^2/\alpha_s$ with a constant $c\in(0,1)$ similarly to \citet{mardaniVariationalPerspectiveSolving2023} .

\begin{algorithm}[h]
\caption{Diffusion-regularized Wasserstein Gradient Flow (DWGF)}\label{algo:diff_kl}
\begin{algorithmic}[1]
\State \textbf{Inputs:} Observation $y \in \sY$, differentiable forward operator $\mathcal{A}:\sX\to \sY$, initial particles $\{z^{(i)}_{0,0}\}_{i=1}^N$, diffusion weights $w(t)$, regularization strength $\gamma$, data consistency weight $\lambda$, pretrained latent diffusion model with score $s_{\theta^-}(\cdot, \cdot):\sZ \times \R_+ \to \sZ$ and its \vae~with encoder-decoder pair $(\mathcal{E}_{\phi^-},\mathcal{D}_{\phi^-})$
\State Set counter $k\gets 0$
\For{$s\in \{T, T-1, \cdots, 0\}$}
    \State Sample $\epsilon^{(i)} \sim \mathcal{N}(0,I)$ and decode $x^{(i)}_0\gets g_{\phi^-}(\epsilon, z_{0,k}^{(i)})$ for $i\in [N]$
    \State Data likelihood $\nabla_{x_0^{(i)}}\ell(x_{0}^{(i)})\gets \nabla_{x_0^{(i)}} (-\frac{1}{2\sigma_y^2}\|y-\mathcal{A}(x_{0}^{(i)})\|^2_2)$
    \State \begin{align}    
    u(z_{0,k}^{(i)})&\gets \left( - \frac{\lambda}{\rho^2}[\mathcal{D}_{\phi^-}(\mathcal{E}_{\phi^-}(x_0^{(i)}))]-x_0^{(i)}]-\nabla_{x_0^{(i)}}\ell(x_{0}^{(i)})\right)\frac{\partial \mathcal{D}_{\phi^-}(z_{0,k}^{(i)})}{\partial z_{0,k}^{(i)}}
    \end{align}
    \State Sample $\nu^{(i)}\sim \mathcal{N}(0,I)$ for all $i\in [N]$
    \State Compute $z_{s,k}^{(i)}\gets \alpha_s z_{0,k}^{(i)}+\sigma_s\nu^{(i)}$
    \State \begin{align}
        v(z_{0,k}^{(i)})\gets w(s) \left(\nabla_{z_{s,k}^{(i)}} \log \frac{1}{N}\sum_{j=1}^N \mathcal{N}(z_{s,k}^{(i)}; \alpha_s z_{0,k}^{(j)}, \sigma_s^2I) -  s_{\theta^-}(z_{s,k}^{(i)}, s)\right) \frac{\partial z_{s,k}^{(i)}}{\partial z_{0,k}^{(i)}}
    \end{align}
    \State Simulate $z_{0,k+1}^{(i)}\gets \mathrm{OptimizerStep}(z_{0,k}^{(i)}, u(z_{0,k}^{(i)})+ \gamma v(z_{0,k}^{(i)}))$ for all $i\in [N]$
    \State $k\gets k+1$
\EndFor
\State\Return Particles decoded $\{\mathcal{D}_{\phi^-} (z^{(i)}_{0,k})\}_{i=1}^N$
\end{algorithmic}
\end{algorithm}

\section{Experimental Details}
\label{app:numerics}
\paragraph{Implementation} In practice, we use the Adam optimizer \citep{kingma2017adammethodstochasticoptimization} with a learning rate of $lr=1.0$ and default hyperparameters $(\beta_1,\beta_2)=(0.9,0.999)$ to solve the ODE as in \Cref{algo:diff_kl}; this is done by viewing $u(z_{0,k}^{(i)})+ \gamma v(z_{0,k}^{(i)})$ as the gradient of a loss function to be optimized. This approach effectively introduces momentum and an adaptive diagonal preconditioning at each step, which helps mitigate ill-conditioning and flat minima. While unconventional for numerical solutions to ODEs, this heuristic is supported by recent work that formally incorporates momentum \citep{limmomentumparticlemaximum2024} and preconditioning \citep{limParticleSemiimplicitVariational2024} into the simulation of gradient flows.

We set the balancing coefficient $\gamma=0.15$ and the data consistency weight to $\lambda=0.1 \rho^2$ (cf. \Cref{sec:algo_consi}), where we choose the decoder standard deviation to be $\rho=10^{-3}$. We choose Stable Diffusion v2.1 \citep{rombachhighresolutionimagesynthesis2022} as the base model and set the number of sampling steps $T=999$, which are the same as \citet{zilbersteinRepulsiveLatentScore2024}.

\paragraph{Evaluation} We follow the experimental setups in \citet{zilbersteinRepulsiveLatentScore2024} and evaluate our methods on the first $100$ images from the validation set of FFHQ \citep{karras2019stylebasedgeneratorarchitecturegenerative} using $4$ particles. Due to limited computational budget, we do not reproduce the experiments in \citet{zilbersteinRepulsiveLatentScore2024} but choose to report their best results for each task therein, namely the non-repulsive version of \gls*{RLSD}.

\section{Further Discussion and Related Works}
\label{app:further_discussion}
\paragraph{Further Discussions} Despite potentials of \gls*{DWGF} in inverse problems, it has some significant limitations. Firstly, our method does not yet match the performance of state-of-the-art in terms of FID (\Cref{sec:experiments}), which we attribute to the lack of proper regularization and insufficient hyperparameter tuning. Secondly, analogous to other variational approaches \citep{mardaniVariationalPerspectiveSolving2023,zilbersteinRepulsiveLatentScore2024}, \gls*{DWGF} requires a large number of sampling steps. A promising research direction is thus to integrate our framework with recent advances in few-step and consistency models\citep{songconsistencymodels2023,luoDiffinstructUniversalApproach2024,yinImprovedDistributionMatching2024,geng2025meanflowsonestepgenerative}, which are emerging contenders to traditional iterative models. Finally, our method uses a particle cloud to approximate the intractable integral in solving the Wasserstein gradient flow, which may lead to considerable memory requirements.

\paragraph{Related Works} Gradient flow approaches have been adopted in the context of general inverse problems in recent approaches \citep{akyildiz2025efficientpriorcalibrationindirect,vadeboncoeur2025efficientdeconvolutionpopulationalinverse}. In imaging, \citet{hagemann2024posteriorsamplingbasedgradient} uses a gradient flow of the \mmd~\citep{jmlr:v13:gretton12a}, which is parametrized as a sequence of pushforward maps. \cite{zilbersteinRepulsiveLatentScore2024} uses an interacting particle system approach with repulsive potential, but they compute the marginal score of the latent distribution $\nabla_{z_t} \log q(z_t^{(i)}|y)$ as $- ({z}_t^{(i)} - \alpha_t {z}_0^{(i)}) / \sigma_t^2 = -{\epsilon^{(i)}}/{\sigma_t}$, which only holds when $q(z_0|y)$ is Gaussian, hence their approach reduces to a Bures-Wasserstein gradient flow, similar to RED-Diff \citep{mardaniVariationalPerspectiveSolving2023}. Our work is most related to \citet{wangProlificDreamerHighFidelityDiverse2023}, which treat the parameters of an 3D representation MLP as particles to be optimized via Wasserstein gradient flow. However, their method operates in a conditional setting and involves training another network to approximate the score. Also related to our approach are works on score-distillation, which involve the training of a generator $p_\theta$ to approximate the output of a diffusion model \citep{yinOnestepDiffusionDistribution2024,yin2024improveddistributionmatchingdistillation,luoDiffinstructUniversalApproach2024,xieEMDistillationOnestep2024}. 
\end{document}